\newtheorem{prop}{Proposition}[section]
\newtheorem{theo}[prop]{Theorem}
\newtheorem{lem}[prop]{Lemma}
\theoremstyle{definition}
\newtheorem{defin}[prop]{Definition}
\newcommand{\I}{\mathbb{I}}
\renewcommand{\v}{\>v}
\newcommand{\x}{\>x}
\newcommand{\F}{\mathbb{F}}
\newcommand{\V}{\mathbb{V}}
\newcommand{\W}{\mathcal{W}}
\newcommand{\N}{\mathcal{N}}
\newcommand{\R}{\mathbb{R}}
\newcommand{\E}{\mathbb{E}}
\newcommand{\G}{\mathcal{G}}
\newcommand{\n}{^{\scriptscriptstyle (n)}}
\newcommand{\1}{\leavevmode\hbox{\rm \small1\kern-0.35em\normalsize1}}
\newcommand{\ind}[1]{\1_{\{#1\}}}
\def\egaldef{\stackrel{\mbox{\tiny def}}{=}}
\title{Local stability of Belief Propagation algorithm with multiple fixed
points}
\author{Victorin Martin\\INRIA Paris-Rocquencourt
  \and Jean-Marc Lasgouttes\\INRIA Paris-Rocquencourt
  \and Cyril Furtlehner\\INRIA Saclay
}
\date{May 25, 2012}
\begin{document}

\maketitle

\begin{abstract}
$\ $A number of problems in statistical physics and
computer science can be expressed as the computation of marginal
probabilities over a Markov random field. Belief propagation, an iterative
message-passing algorithm, computes exactly such marginals when the underlying graph
is a tree. But it has gained its popularity as an
efficient way to approximate them in the more general case, even if it can exhibits
multiple fixed points and is not guaranteed to converge. In this
paper, we express a new sufficient condition for local stability of a belief
propagation fixed point in terms of the graph structure and the beliefs values at the
fixed point. This gives credence to the usual understanding that Belief Propagation
performs better on sparse graphs.
\end{abstract}
\vspace{5mm}
\noindent\begin{scriptsize}{Submitted to: \textit{Starting Artificial Intelligence
Research Symposium 2012}}\end{scriptsize}

\section{Introduction}

We consider in this work a Markov random field (MRF) on a finite
graph with local interactions, on which we want to compute marginal
probabilities. The structure of the underlying model is described by a
set of discrete variables $\x=\{x_i,i\in \V\}\in\{1,\ldots,q\}^\V$,
where the set $\V$ of variables is linked together by so-called
``factors'' which are subsets $a\subset \V$ of variables. If $\F$ is
this set of factors, we consider the set of probability measures of
the form
\begin{equation}\label{eq:joint}
p(\x) = \prod_{i\in\V}\phi_i(x_i) \prod_{a\in\F}\psi_a(\x_a),
\end{equation}
where $\x_a=\{x_i, i\in a\}$. In what follows, a factor will be
indifferently considered as a node in a graph or as a set of variables.
In this respect, $i\in a$ ca be read as ``the variable node $i$ is
connected to the factor node $a$.''

$\F$ together with $\V$ define the factor graph $\G$ \cite{Kschi},
which is an undirected bipartite graph. We will also assume that $p$ is strictly
positive, which is to say that the MRF exhibits no deterministic behavior. The set
$\E$ of edges contains all the pairs $(a,i)\in\F\times\V$ such that $i\in a$. We
denote $d_a$ (resp.\ $d_i$) the degree of the factor node $a$ (resp.\ of the variable
node $i$).% and $C$ the number of independent cycles of $\G$.

Exact procedures for computing marginal probabilities of $p$ generally
face an exponential complexity and one has to resort to approximate
procedures. 
% The Bethe approximation, which is used in
% statistical physics, consists in minimizing an approximate version of the
% variational free energy associated to~(\ref{eq:joint}).
In computer science, the belief propagation (BP) algorithm~\cite{Pearl} is a
message passing procedure that allows to compute efficiently exact
marginal probabilities when the underlying graph is a tree. When the
graph has cycles, it is still possible to apply the procedure, which
converges with a rather good accuracy on sufficiently sparse graphs.
However, there may be several fixed points, corresponding to stationary points of
the Bethe free energy~\cite{YeFrWe3}. Stable fixed points of BP are local minima of
the Bethe free energy~\cite{Heskes4,WaFu}.

The question of convergence of BP has been addressed in a series of works
\cite{Tatikonda02,Ihler,MooijKappen07}, which establish sufficient conditions on the
MRF under which BP converges to a unique fixed point. However, cases with multiple
fixed points can be used to encode different patterns~\cite{FuLaAu} and have not
been studied yet. Wainwright~\cite{Wain3} suggests that, facing the joint problem of
parameter estimation and prediction in a MRF, estimation under the Bethe
approximation and prediction using BP is an efficient setting. This consist in
choosing \eqref{eq:joint} such that one fixed point is known. We propose here to
change the viewpoint and, instead of looking for conditions ensuring a single fixed
point, examine the local properties of each of them. Theorem \ref{thm:stability}
gives a sufficient condition for local stability of fixed points which quantifies
the known fact that BP performs better in sparser graphs.
%\smallskip

The paper is organized as follows: the BP algorithm and its various
normalization strategies are defined in Section~\ref{sec:algorithm}.
Section~\ref{sec:bdynamic} exhibits cases where convergence of
messages is equivalent to convergence of beliefs, allowing us to consider only
message convergence. Finally in Section~\ref{sec:stability}, we provide some
sufficient conditions for local stability of BP fixed points.
Section~\ref{sec:conclusion} concludes the paper.

%%%%%%%%%%%%%%%%%%%%%%%%%%%%%%%%%%%%%%%%%%%%%%%%%%%%%%%%%%%%%%%%%%%%%%
\section{The belief propagation algorithm}\label{sec:algorithm}

The belief propagation algorithm~\cite{Pearl} is a message passing
procedure, whose output is a set of estimated marginal probabilities,
the beliefs $b_a(\x_a)$ (including single nodes beliefs $b_i(x_i)$).
The idea is to factor the marginal probability at a given site as a
product of contributions coming from neighboring factor nodes, which
are the messages. With definition \eqref{eq:joint} of the joint
probability measure, the updates rules read:
\begin{align}
m_{a\to i}(x_i) &\gets
\sum_{\x_{a\setminus i}} \psi_a(\x_a)\prod_{j\in a\setminus i} 
n_{j\to a }(x_j), \label{urules}\\[0.2cm]
n_{i \to a}(x_i) &\egaldef \phi_i(x_i)\prod_{a'\ni i, a'\ne a}
m_{a'\to i}(x_i), \label{urulesn}
\end{align}
where the notation $\sum_{\x_{a\setminus i}}$ should be understood as
summing from $1$ to $q$ all the variables $x_j$, $j\in a\subset \V$,
$j\ne i$. At any point of the algorithm, one can
compute the current beliefs as
\begin{align}
b_i(x_i) &\egaldef 
\frac{1}{Z_i(m)}\phi_i(x_i)\prod_{a\ni i} m_{a\to i}(x_i),
\label{belief1}\\[0.2cm]
b_a(\x_a) &\egaldef 
\frac{1}{Z_a(m)}\psi_a(\x_a)\prod_{i\in a} n_{i\to a}(x_i),
\label{belief2} 
\end{align}
where $Z_i(m)$ and $Z_a(m)$ are the normalization constants that
ensure that
\begin{equation}
\label{eq:normb}
 \sum_{x_i} b_i(x_i) = 1,\qquad\sum_{\x_a}b_a(\x_a)=1.
\end{equation}
These constants reduce to $1$ when $\G$ is a tree. When the algorithm has converged,
the following compatibility condition holds :
\begin{equation}
\label{eq:compat}
\sum_{\x_{a \setminus i}} b_a(\x_a) = b_i(x_i).
\end{equation}
In practice, the messages are often normalized so that
\begin{equation}\label{eq:normalization}
 \sum_{x_i=1}^q m_{a\to i}(x_i)= 1.
\end{equation}
However, the possibilities of normalization are not limited to this
setting. Consider the mapping
\begin{equation}\label{eq:theta}
 \Theta_{ai,x_i}(m)\egaldef
\sum_{\x_{a\setminus i}} \psi_a(\x_a)
\prod_{j\in a\setminus i}\biggl[\phi_j(x_j)
\prod_{a'\ni j, a'\ne a}m_{a'\to j}(x_j)\biggr].
\end{equation}
A normalized version of BP is defined by the update rule
\begin{equation}\label{eq:normrule}
  \tilde m_{a\to i}(x_i)
\gets\frac{\Theta_{ai,x_i}(\tilde m)}
       {Z_{ai}(\tilde m)}.
\end{equation}
where $Z_{ai}(\tilde m)$ is a constant that depends on the messages and
which, in the case of~(\ref{eq:normalization}), reads
\begin{equation}\label{eq:Zmess}
Z^\mathrm{mess}_{ai}(\tilde m) \egaldef \sum_{x=1}^q\Theta_{ai,x}(\tilde
m).
\end{equation}
% In the remaining of this paper, (\ref{urules},\ref{urulesn}) will be
% referred to as ``plain BP'' algorithm, to differentiate it from the
% ``normalized BP'' of \eqref{eq:normrule}.
Following~\cite{Wain}, it is worth noting that (\ref{urules},\ref{urulesn}) can be
rewritten as
\begin{equation}\label{urule2}
 m_{a\to i}(x_i)\gets \frac{Z_a(m)b_{i|a}(x_i)}{Z_i(m)b_i(x_i)}m_{a\to
i}(x_i),
\end{equation}
where we use the convenient shorthand notation $ b_{i|a}(x_i) \egaldef
\sum_{\x_{a\setminus i}}b_a(\x_a)$. This suggests a different type of normalization,
used in particular by~\cite{Heskes4}, namely
\begin{equation}\label{eq:Zbel}
 Z^\text{bel}_{ai}(\tilde m) \egaldef \frac{Z_a(\tilde m)}{Z_i(\tilde m)},
\end{equation}
which leads to the simple update rule
\begin{equation}
\label{eq:simple_uprule}
 \tilde m_{a\to i}(x_i)\gets \frac{b_{i|a}(x_i)}{b_i(x_i)}\tilde
 m_{a\to i}(x_i).
\end{equation}

%%%%%%%%%%%%%%%%%%%%%%%%%%%%%%%%%%%%%%%%%%%%%%%%%%%%%%%%%%%%%%%%%%%%%%%%%%%
\section{Belief and message dynamic}
\label{sec:bdynamic}

At each step of the algorithm, using \eqref{belief1} and \eqref{belief2}, we can
compute the current beliefs $b\n_i$ and $b\n_a$ associated with the message $m\n$.
The sequence $m\n$ will be said to be ``$b$-convergent'' when the
sequences $b\n_i$ and $b\n_a$ converge. This is the convergence that
is interesting in practice. The term ``$m$-convergence''
will be used to refer to convergence of the sequence $m\n$ itself.
Since the algorithm is expressed in terms messages, $m$-convergence
obviously implies $b$-convergence, but the opposite is not generally true.
The aim of this section is
to provide a broad class of normalization policies such that $b$- and
$m$-convergence, are
equivalent in order to focus on $m$-convergence in the next section.

As pointed out in~\cite{MooijKappen07}, different sets of messages
correspond to the same set of beliefs. The following lemma makes this
explicit.

\begin{lem}
\label{lem:bel_inv}
Two set of messages $m$ and $m'$ lead to the same beliefs if, and only
if, there is a set of strictly positive constants $c_{ai}$ such that
\[
  m'_{a\to i}(x_i) = c_{ai} m_{a\to i}(x_i).
\]
\end{lem}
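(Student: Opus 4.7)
The plan is to handle the two implications separately, the forward one being essentially a direct substitution and the converse requiring a short algebraic argument.

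For the ``if'' direction, I would substitute $m'_{a\to i}(x_i)=c_{ai}m_{a\to i}(x_i)$ into the belief formulas \eqref{belief1} and \eqref{belief2}. In \eqref{belief1}, the product $\prod_{a\ni i} m'_{a\to i}(x_i)$ picks up a multiplicative factor $\prod_{a\ni i}c_{ai}$ which depends on $i$ but not on $x_i$, so it is absorbed by the new normalization constant $Z_i(m')$ and leaves $b_i$ unchanged. The same argument applies to \eqref{belief2}: each $n'_{i\to a}(x_i)$ is scaled by $\prod_{a'\ni i,\, a'\ne a}c_{a'i}$, which is independent of $x_i$, so $\prod_{i\in a}n'_{i\to a}(\x_a)$ differs from the original only by a constant that is killed by $Z_a(m')$.

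For the converse, assume $m$ and $m'$ produce identical beliefs and set $r_{a\to i}(x_i)\egaldef m'_{a\to i}(x_i)/m_{a\to i}(x_i)$, which is well-defined and strictly positive since the underlying MRF is positive. Equating \eqref{belief1} for $m$ and $m'$ and cancelling $\phi_i(x_i)$ gives, for every $i$,
\[
\prod_{a\ni i}r_{a\to i}(x_i)=K_i,\qquad\text{independent of }x_i.
\]
Equating \eqref{belief2} and cancelling $\psi_a(\x_a)\prod_{i\in a}\phi_i(x_i)$ yields, for every $a$,
\[
\prod_{i\in a}\Bigl[\prod_{a'\ni i,\,a'\ne a}r_{a'\to i}(x_i)\Bigr]=K_a,\qquad\text{independent of }\x_a.
\]
In this last identity the factor inside the brackets depends only on $x_i$, so the left-hand side has the form $\prod_{i\in a}f_i(x_i)$; taking logarithms and using that a separable sum of functions of independent variables can be constant only if each summand is constant, each $f_i$ must itself be a constant $K_{a,i}$.

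Combining the two constraints at a given pair $(a,i)$ with $d_i\ge 2$ gives $r_{a\to i}(x_i)=K_i/K_{a,i}$, which is the desired constant $c_{ai}$; when $d_i=1$ the second relation is vacuous but the first already forces $r_{a\to i}(x_i)=K_i$. The main obstacle, and the only real piece of reasoning, is the step ``$\prod_{i\in a}f_i(x_i)$ constant in $\x_a$ implies each $f_i$ constant,'' which relies crucially on strict positivity so that logs are defined; everything else is bookkeeping with the belief formulas.
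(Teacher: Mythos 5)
Your proof is correct and follows essentially the same route as the paper's: both rely on equating the $b_i$ and $b_a$ expressions for the two message sets, and both hinge on the same key observation that a product $\prod_{i\in a}f_i(x_i)$ constant in $\x_a$ forces each factor $f_i$ to be constant (the paper phrases this as comparing two configurations differing in one coordinate). The only difference is cosmetic ordering: the paper first uses the $b_i$-relations to reduce the $b_a$-relation to $\prod_{j\in a}c_{aj}(x_j)=\mathrm{const}$ and then varies one coordinate, whereas you apply the vary-one-coordinate argument directly to $\prod_{i\in a}\bigl[\prod_{a'\ni i,\,a'\ne a}r_{a'\to i}(x_i)\bigr]$ and divide by the $b_i$-relation afterwards.
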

\begin{proof}
The direct part of the lemma is trivial. Concerning the other part, we
have from (\ref{belief1}) and (\ref{belief2})
\begin{gather*}
 \frac{b_a(\x_a)Z_a(m)}{\psi_a(\x_a)} = \prod_{j \in a}
\prod_{b \ni j, b \ne a} m_{b\to j}(x_j),\\
 \frac{b_i(x_i)Z_i(m)}{\phi_i(x_i)} = \prod_{a \ni i} m_{a\to i}(x_i).
\end{gather*}
Assume the two vectors of messages $m$ and $m'$ lead to the
same set of beliefs $b$ and write $m_{a\to i}(x_i) = c_{ai}(x_i)\,
m'_{a\to i}(x_i)$. Then, from the relation on $b_i(x_i)$, the vector $\>c$
satisfies
\begin{equation}
\label{eq:inv_bi}
\prod_{a \ni i} c_{ai}(x_i)=\prod_{a \ni i} \frac{m_{a\to i}(x_i)}{m'_{a\to i}(x_i)}=
\frac{Z_i(m)}{Z_i(m')}
\egaldef v_i.
\end{equation}
Moreover, we want to preserve the beliefs $b_a$. Using~\eqref{eq:inv_bi},
we have
\begin{equation}
\label{eq:va_def}
 \prod_{j \in a}
c_{aj}(x_j) = \prod_{j \in a} \frac{m_{a\to j}(x_j)}{m'_{a\to j}(x_j)} =
\frac{Z_a(m')}{Z_a(m)} \prod_{i \in a} v_i \egaldef v_a,
\end{equation}
since $v_i$ (resp. $v_a$) does not depend on the choice of $x_i$ (resp.
$\x_a$), \eqref{eq:va_def} implies the independence of $c_{ai}(x_i)$ with
respect to $x_i$. Indeed, if we compare two vectors $\x_a$ and $\x_a'$
such that,
for all $i \in a\setminus j$, $x'_i = x_i$, but $x'_j \ne x_j$, then
 $c_{aj}(x_j) = c_{aj}(x'_j)$, which concludes the proof.
\end{proof}
Following an idea developed in \cite{MooijKappen07}, it is natural to
look at the behavior of BP in a quotient space corresponding to the
invariance of beliefs. First, we will introduce a natural
parametrization for which the quotient space is just a vector space.
Then we will show that, in terms of $b$-convergence, the effect of
normalization is null. Let us consider the following change of
variables:
\begin{equation*}
%label{eq:logparam}
 \mu_{a\to i}(x_i) \egaldef \log m_{a\to i}(x_i),
\end{equation*}
so that the plain update mapping (\ref{eq:theta}) becomes
\begin{equation*}
%\label{eq:logup}
 \mu_{a\to i}(x_i) \leftarrow \Lambda_{ai,x_i}(\mu) \egaldef \log \left[\sum_{\x_a
\setminus i}
\psi_a(\x_a)\exp
\Bigl(\sum_{j \in a \setminus i} \sum_{\substack{b \ni j\\ b
\ne a}} \mu_{b\to j}(x_j)\Bigr)\right].
\end{equation*}
We have $\mu \in \N \egaldef \R^{|\E|\times q}$ and we define the vector
space $\W$ which is the linear span of the following vectors
$\{e_{ai} \in \N\}_{(ai)\in\E}$
\[
 (e_{ai})_{cj,x_j} \egaldef \1_{\{a=c,i=j\}}.
\]
The invariance set of the beliefs corresponding to $\mu$ is simply the affine space
$\mu + \W$ (Lemma~\ref{lem:bel_inv}). So $\mu^{(n)}$ is $b$-convergent iff
$\mu^{(n)}$ converges in the quotient space $\N \setminus \W$, which is simply a
vector space \cite{Halmos}. We use the notation $[x]$ for the
canonical projection of $x$ on $\N \setminus \W$.
%\begin{lem}
%\label{lem:quo}

The normalization of $\mu$ leads to $\mu + w$ with some $w \in \W$. Indeed we have
\begin{equation*}
 \Lambda_{ai,x_i}(\mu + w) = \log \Bigl(\sum_{j \in a \setminus i}
\sum_{\substack{b \ni j\\b \ne a}} w_{bj} \Bigr) + \Lambda_{ai,x_i}(\mu)
 \egaldef l_{ai} + \Lambda_{ai,x_i}(\mu),
\end{equation*}
which can be summed up by $[\Lambda(\mu + \W)] = [\Lambda(\mu)]$, since $l\in \W$.
% \begin{equation}
% \label{eq:quo}
%  [\Lambda(\mu + \W)] = [\Lambda(\mu)],
% \end{equation}
This means that normalization plays no role in $\N\setminus\W$ and
implies the following proposition.
\begin{prop}
\label{prop:dynamic}
The dynamic, i.e.\ the value of the normalized beliefs at each step, of the
BP algorithm with or without normalization is exactly the same.
\end{prop}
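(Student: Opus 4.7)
The plan is to show by induction on the iteration index $n$ that the normalized and un-normalized versions of BP, started from the same initial condition, produce messages that always lie in the same equivalence class of $\N\setminus\W$, and then to invoke Lemma~\ref{lem:bel_inv} to conclude that the beliefs coincide at every step.

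First I would fix an initial message vector $\mu^{(0)}\in\N$ and consider two sequences: $\mu^{(n+1)} = \Lambda(\mu^{(n)})$ for plain BP, and $\tilde\mu^{(n+1)} = \Lambda(\tilde\mu^{(n)}) + w^{(n)}$ for some $w^{(n)}\in\W$ corresponding to the chosen normalization policy (for example, for the message-normalization~\eqref{eq:Zmess}, one has $w^{(n)}_{ai} = -\log Z_{ai}^{\mathrm{mess}}(\exp\tilde\mu^{(n)})$, so that $w^{(n)}\in\W$). The base case $[\tilde\mu^{(0)}] = [\mu^{(0)}]$ is immediate. For the inductive step, suppose $\tilde\mu^{(n)} = \mu^{(n)} + v^{(n)}$ with $v^{(n)}\in\W$. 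Using the identity $\Lambda(\mu+\W) \subset \Lambda(\mu)+\W$ established just before the proposition, we get $\Lambda(\tilde\mu^{(n)}) = \Lambda(\mu^{(n)}) + l^{(n)}$ for some $l^{(n)}\in\W$, and hence $\tilde\mu^{(n+1)} = \mu^{(n+1)} + l^{(n)} + w^{(n)}$. Since $\W$ is a vector space, $l^{(n)} + w^{(n)}\in\W$, which gives $[\tilde\mu^{(n+1)}] = [\mu^{(n+1)}]$.

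To finish, I would translate this equality back to the level of beliefs. By Lemma~\ref{lem:bel_inv}, two message vectors yield the same beliefs precisely when they differ by a factor depending only on the edge $(a,i)$, which in log-coordinates is the statement that they differ by an element of $\W$. Hence $[\tilde\mu^{(n)}] = [\mu^{(n)}]$ implies that the beliefs $b_i^{(n)}$ and $b_a^{(n)}$ coming from $\tilde\mu^{(n)}$ and $\mu^{(n)}$ are identical at every step $n$, which is exactly the claim of the proposition.

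There is no real obstacle here: the content of the proposition is essentially repackaged from the preceding observation $[\Lambda(\mu+\W)] = [\Lambda(\mu)]$ together with Lemma~\ref{lem:bel_inv}. The only point one has to be careful about is to verify that the normalization vector $w^{(n)}$ associated to any reasonable rule (those of~\eqref{eq:Zmess} and~\eqref{eq:Zbel} in particular) actually belongs to $\W$, i.e.\ depends only on the edge and not on the value $x_i$; this follows from the definitions since the normalizing scalar is, by construction, a function of the edge alone.
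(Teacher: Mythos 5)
Your proof is correct and follows essentially the same route as the paper: the paper's argument is precisely that normalization shifts $\mu$ by an element of $\W$, that $[\Lambda(\mu+\W)]=[\Lambda(\mu)]$, and that beliefs depend only on the class in $\N\setminus\W$ by Lemma~\ref{lem:bel_inv}. You have merely made the paper's implicit induction over iterations explicit, including the (correct) check that the log-normalization constants lie in $\W$ because they do not depend on $x_i$.
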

We will come back to this vision in terms of quotient space in
Section~\ref{ssec:b2qs}, and we now exhibit a broad class of
normalizations for which $b$-convergence and $m$-convergence are
equivalent.
\begin{defin}
A normalization $Z_{ai}$ is said to be \emph{positive homogeneous}
when it is of the form $Z_{ai} = N_{ai} \circ \Theta_{ai}$, with
$N_{ai}\,:\;\R_+^q\mapsto \R_+$ a positive homogeneous function of order $1$
satisfying
\begin{align}
\label{eq:Nlin}
 N_{ai}(\lambda m_{a\to i}) &=
\lambda N_{ai}(m_{a\to i}), \forall \lambda \geq 0.\\
\label{eq:Nai_pos}
N_{ai}(m_{a\to i}) &= 0 \iff  m_{a \to i} = 0.
\end{align}
\end{defin}
%The part $\impliedby$ of \eqref{eq:Nai_pos} is obviously implied by
%\eqref{eq:Nlin}. 
A particular family of positive homogeneous normalizations is obtained
when $N_{ai}$ is a norm on $\R^q$. This is the case the
normalization $Z_{ai}^\mathrm{mess}$ \eqref{eq:Zmess}. It is actually
not necessary to have a proper norm: the scheme used in \cite{WaFu} 
amounts to $Z^1_{ai}(m) \egaldef \Theta_{ai,1}(m)$. 

Note however that $Z_{ai}^\text{bel}$ \ref{eq:Zbel} is not part
positive homogeneous, and therefore the results of this section do not
apply to this case.
\begin{prop}
 \label{prop:bconv_mconv}
For any positive homogeneous normalization $Z_{ai}$ with  \emph{continuous}
$N_{ai}$, $m$-convergence and $b$-convergence are equivalent.
\end{prop}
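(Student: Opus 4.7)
The forward direction, that $m$-convergence implies $b$-convergence, is immediate since each belief is a continuous function of the messages (the normalizations $Z_i(m), Z_a(m)$ being nonzero whenever messages are strictly positive). For the converse, my plan is to construct, via the normalization, a continuous section of the canonical projection $\N \to \N\setminus\W$, so that $b$-convergence---which by the discussion preceding the proposition amounts to convergence in the quotient $\N\setminus\W$---forces convergence of the normalized messages themselves.

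The key observation is that the homogeneity of order $1$ forces every iterate of~(\ref{eq:normrule}) to satisfy $N_{ai}(\tilde m_{a\to i}) = 1$, since
\[
N_{ai}\!\bigl(\Theta_{ai}(\tilde m)/Z_{ai}(\tilde m)\bigr)
= N_{ai}\!\bigl(\Theta_{ai}(\tilde m)\bigr)/Z_{ai}(\tilde m) = 1.
\]
Moreover, within each equivalence class under the scaling of Lemma~\ref{lem:bel_inv}, there is a unique representative with $N_{ai}=1$ on every edge: existence by dividing each component by $N_{ai}(m_{a\to i})$ (strictly positive by~\eqref{eq:Nai_pos}), uniqueness because two normalized representatives must differ by constants $c_{ai}$, and homogeneity then forces $c_{ai}=1$. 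This identifies the ``normalized slice'' with the quotient $\N\setminus\W$ set-theoretically, and continuity of $N_{ai}$ makes the identification bicontinuous.

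Assume now $b\n$ converges. This is equivalent to convergence of $[\tilde m\n]$ in the finite-dimensional quotient $\N \setminus \W$, which in turn yields representatives $m'\n \in [\tilde m\n]$ with $m'\n \to m^*$ in $\N$. Applying the section above, $\tilde m\n_{a\to i} = m'\n_{a\to i}/N_{ai}(m'\n_{a\to i})$, and the continuity of $N_{ai}$ together with $N_{ai}(m^*_{a\to i})>0$ yield $\tilde m\n_{a\to i} \to m^*_{a\to i}/N_{ai}(m^*_{a\to i})$, which is exactly $m$-convergence. The main subtlety I anticipate is the justification of $m^*_{a\to i}\ne 0$ on every edge, so that the limiting denominator does not vanish; this should follow from strict positivity of the MRF ($\phi_i,\psi_a>0$), which propagates through $\Theta_{ai}$ to keep the normalized messages uniformly bounded away from $0$ throughout the iteration.
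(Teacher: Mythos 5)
There is a genuine gap, and it sits exactly at the step you label as given: ``$b$-convergence is equivalent to convergence of $[\tilde m\n]$ in $\N\setminus\W$.'' The direction you actually need---$b$-convergence implies convergence in the quotient---is the entire difficulty of the proposition, and neither Lemma~\ref{lem:bel_inv} nor the discussion in Section~\ref{sec:bdynamic} proves it. Lemma~\ref{lem:bel_inv} shows that the map from equivalence classes $[\mu]$ to beliefs is \emph{injective}; it says nothing about the continuity of its inverse, and a continuous injection on a non-compact set need not have a continuous inverse. The sentence in the paper asserting the ``iff'' is an unproven remark, and its hard half is precisely what Proposition~\ref{prop:bconv_mconv} establishes. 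Indeed, your own section argument (which is correct: iterates of~(\ref{eq:normrule}) satisfy $N_{ai}(\tilde m\n_{a\to i})=1$, each class has a unique such representative, and quotient convergence implies convergence of these representatives) shows that the assumed equivalence is, modulo easy steps, \emph{equivalent} to the proposition being proven. So the argument is circular in substance: the one nontrivial implication is imported rather than derived.

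What is missing is an actual reconstruction of the messages from the beliefs, and this is what the paper's proof supplies. Dividing the product of the single-node belief equations over $j\in a$ by the factor belief equation yields the recombination
\[
\prod_{j \in a} \tilde m\n_{a\to j}(x_j)
\;=\; \frac{1}{\tilde Z_{ai}(\tilde m)}\,
\psi_a(\x_a)\,\frac{\prod_{j \in a} b_j\n(x_j)}{b_a\n(\x_a)} ,
\]
then one freezes $\x_{a\setminus i}$, regards both sides as vectors indexed by $x_i$, and applies $N_{ai}$: homogeneity kills both the constant $\tilde Z_{ai}$ and the frozen factors $\tilde m\n_{a\to j}(x_j)$, $j\ne i$, and since $N_{ai}(\tilde m\n_{a\to i})=1$ one gets the explicit formula
\[
\tilde m\n_{a\to i}(x_i)
 = \frac{K\n_{ai}(\x_{a\setminus i};x_i)}{N_{ai}\bigl[\>K\n_{ai}(\x_{a\setminus i})\bigr]},
\qquad
K\n_{ai}(\x_{a\setminus i};x_i) \egaldef \psi_a(\x_a)\,\frac{\prod_{j \in a} b_j\n(x_j)}{b_a\n(\x_a)} .
\]
This exhibits the normalized message as a continuous function of the beliefs (the denominator is nonzero by~\eqref{eq:Nai_pos} since $K\n_{ai}>0$), which is exactly the continuity of the inverse map that your proof lacks; $m$-convergence then follows directly from $b$-convergence, with no detour through the quotient space. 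Your observation that $N_{ai}(\tilde m\n_{a\to i})=1$ for all iterates is a genuine ingredient of this proof---it is what turns the recombination into a closed-form expression---so you had the right pieces; what is absent is the recombination step that inverts beliefs into messages, and without it the claimed equivalence with quotient convergence has no justification.
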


\begin{proof}
Assume that the sequences of beliefs are such that $b\n_a\to b_a$ and $b\n_i\to b_i$
as $n\to\infty$. The idea of the proof is to first express the normalized messages
$\tilde m\n_{a\to i}$ at each step in terms of these beliefs, and then to conclude
by a continuity argument. Starting from a rewrite of
\eqref{belief1}--\eqref{belief2},
\begin{align*}
%\label{eq:convbi}
b_i\n(x_i) &= \frac{\phi_i(x_i)}{Z_i(\tilde m\n)}\prod_{a \ni i} \tilde m_{a\to
i}\n(x_i), \\
%\label{eq:convba}
 b_a\n(\x_a)  &=\frac{\psi_a(\x_a)}{Z_a(\tilde m\n)}\prod_{j \in a} \phi_j(x_j)
\prod_{b \ni j, b
\ne a}
\tilde m_{b\to j}\n(x_j),
\end{align*}
one obtains by recombination
\[
 \prod_{j \in a} \tilde m\n_{a\to j}(x_j) 
 = \frac{K\n_{ai}(\x_{a\setminus i};x_i)}{\tilde Z_{ai}(\tilde m)},
\]
where an arbitrary variable $i\in a$ has been singled out and
\[
 \frac{1}{\tilde Z_{ai}(\tilde m)}\egaldef\frac{\prod_{j \in a} Z_j(\tilde
m\n)}{Z_a(\tilde m\n)},\quad K\n_{ai}(\x_{a\setminus i};x_i)
\egaldef \psi_a(\x_a) \frac{\prod_{j \in a} b_j\n(x_j)}{b_a\n(\x_a)}. 
\]

Assume now that $\x_{a\setminus i}$ is fixed and consider
$\>K\n_{ai}(\x_{a\setminus i})\egaldef K\n_{ai}(\x_{a\setminus
  i};\cdot)$ as a vector of $\R^q$. Normalizing each side of the
equation with a positive homogeneous function $N_{ai}$ yields
\[
 \frac{\tilde m\n_{a\to i}(x_i)}{N_{ai}\bigl[\tilde m\n_{a\to
i}\bigr]}
 = \frac{K\n_{ai}(\x_{a\setminus
i};x_i)}{N_{ai}\bigl[\>K\n_{ai}(\x_{a\setminus i})\bigr]}.
\]
Actually $N_{ai}\bigl[\tilde m\n_{a\to i}\bigr]=1$, since
$\tilde m\n_{a\to i}$ has been normalized by $N_{ai}$ and therefore
\[
 \tilde m\n_{a\to i}(x_i)
 = \frac{K\n_{ai}(\x_{a\setminus
i};x_i)}{N_{ai}\bigl[\>K\n_{ai}(\x_{a\setminus i})\bigr]}.
\]
This concludes the proof, since $\tilde m\n_{a\to i}$ has been expressed
as a continuous function of $b\n_i$ and $b\n_a$, and therefore it
converges whenever the beliefs converge.
\end{proof}

%%%%%%%%%%%%%%%%%%%%%%%%%%%%%%%%%%%%%%%%%%%%%%%%%%%%%%%%%%%%%%%%
\section{Local stability of BP fixed points}\label{sec:stability}

The question of convergence of BP has been addressed in a series of
works \cite{Tatikonda02,Ihler,MooijKappen07} which establish
conditions and bounds on the MRF coefficients for having global
convergence. In this section, we change the viewpoint and, instead of
looking for conditions ensuring a single fixed point, we examine the local
properties each fixed point.

In what follows, we are interested in the local stability of a message
fixed point $m$ with associated beliefs $b$. It is known that a BP
fixed point is locally attractive if the Jacobian of the relevant
mapping ($\Theta$ or its normalized version) at this point has all its eigenvalues of
modulus strictly smaller than $1$ and unstable when, at least, one eigenvalue has a
modulus strictly greater than $1$.
The characterization of the local stability relies on two ingredients.
The first one is the oriented line graph $L(\G)$ based on $\G$, whose
vertices are the elements of $\E$, and whose oriented links relate
$ai$ to $a'j$ if $j\in a\cap a'$, $j\ne i$ and $a'\ne a$. The
corresponding $0$-$1$ adjacency matrix $A$ is defined by the
coefficients
\begin{equation}\label{defA}
A_{ai}^{a'j} \egaldef\ind{j\in a\cap a',\,j\ne i,\, a'\ne a}.
\end{equation}

The second ingredient is the set of stochastic matrices $B^{(iaj)}$, attached to
pairs of variables $(i,j)$ having a factor node $a$ in common, and which
coefficients at row $k$, column $\ell$ (in $\{1,\ldots,q\}^2$) are the conditional
beliefs
\begin{equation*}%\label{defp}
b_{k\ell}^{(iaj)} \egaldef b_a(x_j=\ell\vert x_i = k)
= \sum_{\x_{a\setminus\{i,j\}}}\frac{b_a(\x_a)}{b_i(x_i)}
\Bigg|_{\substack{x_i=k\\ x_j=\ell}}.
\end{equation*}

\subsection{The unnormalized algorithm}

Let us first consider briefly the unnormalized
algorithm~(\ref{urules},\ref{urulesn}). Using the
representation~(\ref{urule2}), the Jacobian reads
at this point:
\begin{align*}
\frac{\partial \Theta_{ai,x_i}(m)}{\partial m_{a'\to j}(x_j)}
&=\sum_{\x_{a\setminus \{i,j\}}} 
       \frac{b_a(\x_a)}{b_i(x_i)}
       \frac{m_{a\to i}(x_i)}{m_{a'\to j}(x_j)}
       \ind{j\in a\setminus i}\ind{a' \ni j, a'\ne a}\\
&= \frac{b_{ij|a}(x_i,x_j)}{b_i(x_i)}
   \frac{m_{a\to i}(x_i)}{m_{a'\to j}(x_j)}A_{ai}^{a'j}
\end{align*}

Therefore, the Jacobian of the plain BP algorithm is---using a trivial
change of variable---similar to the
matrix $J$ defined, for any pair $(ai,k)$ and $(a'j,\ell)$ of
$\E\times\{1,\ldots,q\}$ by the elements
\begin{equation*}%\label{eq:jacob}
J_{ai,k}^{a'j,\ell}  \egaldef b_{k\ell}^{(iaj)} A_{ai}^{a'j}.
\end{equation*}
This expression is analogous to the Jacobian encountered in
\cite{MooijKappen07}. It is interesting to note that it only depends
on the structure of the graph and on the belief corresponding to the
fixed point. Since $\G$ is a singly connected graph, it is clear that $A$ is an
irreducible matrix. To simplify the discussion, we assume in the
following that $J$ is also irreducible. This will be true as long as
the $\psi$ are always positive.

It can be shown~\cite{MaLaFu} that the spectral radius of $J$ is
always larger than $1$, except in some special cases where the number
of cycles in the graph is less than $1$. We will not develop this point here.

\subsection{Positive homogeneous normalization}\label{ssec:stabnorm}

We have seen in Proposition~\ref{prop:bconv_mconv} that all the
continuous positively homogeneous normalizations make $m$-convergence equivalent to
$b$-convergence. Since they all share the same properties, we look at the particular
case of $Z_{ai}^\text{mess}(m)$, which is both simple and differentiable. The
coefficients of the Jacobian matrix at fixed point $m$ with beliefs $b$ read
\begin{equation*}
\frac{\partial}{\partial \tilde m_{a'\to j}(\ell)}
\biggl[\frac{\Theta_{ai,k}(\tilde m)}
       {\sum_{x=1}^q\Theta_{ai,x}(\tilde m)}\biggr]
= J_{ai,k}^{a'j,\ell}\frac{m_{a\to i}(k)}{m_{a'\to j}(\ell)}
   - m_{a\to i}(k)\sum_{x=1}^q J_{ai,x}^{a'j,\ell}
                               \frac{m_{a\to i}(x)}{m_{a'\to j}(\ell)},
\end{equation*}
which is similar to  the matrix $\widetilde J$ of general term
\begin{equation}\label{eq:jacobn}
\widetilde J_{ai,k}^{a'j,\ell}
\egaldef\biggl[b_{k\ell}^{(iaj)}- \sum_{x=1}^q m_{a\to i}(x) b_{x\ell}^{(iaj)}
\biggr] A_{ai}^{a'j}
= J_{ai,k}^{a'j,\ell} - \sum_{x=1}^q m_{a\to i}(x)J_{ai,x}^{a'j,\ell},
\end{equation}
which can be summarized by $\tilde J = (\I - M)J$, with $\I$ the identity matrix and
$M$:
\[
 M_{ai,k}^{a'j,\ell} \egaldef m_{a'\to j}(\ell)\1_{\{a=b,i=j\}}. 
\]
The presence of the messages in the Jacobian $\widetilde J$ seems to
complicate the study, but in fact the spectrum of $\tilde J$ does not
depend on the messages themselves. It is known (see e.g.~\cite{FuLaAu}) that it is
possible to chose the functions $\hat\phi$ and $\hat\psi$ as
\begin{equation}\label{eq:phipsibethe}
  \hat\phi_i(x_i)  \egaldef \hat b_i(x_i), \qquad
  \hat\psi_a(\x_a) \egaldef\frac{\hat b_a(\x_a)}{\prod_{i\in a}\hat
b_i(x_i)},
\end{equation}
in order to obtain a prescribed set of beliefs $\hat b$ at a
fixed point. Indeed, BP will admit a fixed point
with $b_a=\hat b_a$ and $b_i=\hat b_i$ when $m_{a\to i}(x_i)\equiv 1$.
Since only the beliefs matter here, without loss of generality,
we restrict ourselves in the remainder of this section to the
functions~\eqref{eq:phipsibethe}. Then, from
\eqref{eq:jacobn}, the definition of $\widetilde J$ rewrites
\begin{equation*}%\label{eq:jacobn-2}
\widetilde J_{ai,k}^{a'j,\ell}
\egaldef\biggl[b_{k\ell}^{(iaj)}- \frac{1}{q}\sum_{x=1}^q b_{x\ell}^{(iaj)}
\biggr] A_{ai}^{a'j}
= J_{ai,k}^{a'j,\ell} - \frac{1}{q}\sum_{x=1}^q J_{ai,x}^{a'j,\ell}.
\end{equation*}

For each connected pair $(i,j)$ of variable nodes, we associate to the
stochastic kernel $B^{(iaj)}$ a combined stochastic kernel
$K^{(iaj)}\egaldef B^{(iaj)}B^{(jai)}$.
%, with coefficients
% \begin{equation}\label{def:kernel}
% K_{k\ell}^{(iaj)} 
% %\egaldef K^{(iaj)}(x_j = \ell| x_i = k) 
% \egaldef \sum_{m=1}^q b_{km}^{(iaj)} b_{m\ell}^{(jai)}.
% \end{equation}
In the following we consider $b_i$ as a vector of $\mathbb{R}^q$. Since
$b_i\,B^{(iaj)}=b_j$, $b_i$ is the invariant measure associated to $K$:
\[
 b_i\,K^{(iaj)}=b_i\,B^{(iaj)}B^{(jai)} 
 = b_j\,B^{(jai)} = b_i,
\]
and $K^{(iaj)}$ is reversible, since
\begin{equation*}
b_i(k) K_{k\ell}^{(iaj)} = \sum_{m=1}^q b_{mk}^{(jai)} 
b_j(m) b_{m\ell}^{(jai)} = \sum_{m=1}^q b_{mk}^{(jai)} b_{\ell m}^{(iaj)}
b_i(\ell) = b_i(\ell)\,K_{\ell k}^{(iaj)}.
\end{equation*}
Let $\mu_2^{(iaj)}$ be the second largest eigenvalue of $K^{(iaj)}$ and
let
\begin{equation*}%\label{def:mu2}
\mu_2 \egaldef \max_{(iaj)} \sqrt{|\mu_2^{(iaj)}|}.
\end{equation*}
The combined effect of the graph and of the local correlations on the stability of
the reference fixed point is stated as follows.
\begin{theo}\label{thm:stability}
Let $\lambda_1$ be the Perron eigenvalue of the matrix $A$
\begin{enumerate}
\item[(i)] if $\lambda_1\mu_2 < 1$, the fixed point of BP scheme
(\ref{eq:normrule}, \ref{eq:Zmess}) associated to $b$ is stable.
\item[(ii)] If the system is homogeneous ($B^{(iaj)} = B$ independent of $i$, $j$
and $a$), $\lambda_1 \mu_2 \leq 1$ is also a necessary condition.
%,with $\mu_2$ representing the second largest eigenvalue of $B$.
\end{enumerate}
\end{theo}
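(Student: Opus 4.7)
The plan is to bound $\rho(\tilde J)\leq\lambda_1\mu_2$ for part (i), and to exhibit $\lambda_1\mu_2$ as an eigenvalue of $\tilde J$ in the homogeneous case for part (ii). Two structural features drive the argument: the factorization $J_{ai,k}^{a'j,\ell}=A_{ai}^{a'j}\,b_{k\ell}^{(iaj)}$, which presents $J$ as an $A$-indexed family of stochastic kernels, and the detailed balance $b_i(k)\,b_{k\ell}^{(iaj)} = b_a(k,\ell) = b_j(\ell)\,b_{\ell k}^{(jai)}$, which makes $B^{(iaj)}$ and $B^{(jai)}$ mutual adjoints between $L^2(b_j)$ and $L^2(b_i)$ and renders $K^{(iaj)}$ self-adjoint on $L^2(b_i)$ with top eigenvalue $1$ and next one of modulus at most $\mu_2^2$.

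The first step is a spectral reduction. Let $U\subset\N$ be the space of vectors $v$ with $v_{ai,k}$ independent of $k$, and set $U^\perp_b:=\{v : \sum_k b_i(k)v_{ai,k}=0\text{ for every }(ai)\}$. Using $B^{(iaj)}\mathbf{1}=\mathbf{1}$ and the marginalization $\sum_k b_i(k)b_{k\ell}^{(iaj)}=b_j(\ell)$, one checks that $J$ leaves both $U$ and $U^\perp_b$ invariant while $\tilde J$ annihilates $U$. Since $\N=U\oplus U^\perp_b$, the operator $\tilde J$ descends to the quotient $\N/U$, and identifying this quotient with $U^\perp_b$ via projection along $U$, the induced operator is precisely $J|_{U^\perp_b}$: for $w\in U^\perp_b$, the vectors $(\I-M)Jw$ and $Jw$ differ by $MJw\in U$ and so represent the same coset. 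Consequently $\rho(\tilde J)=\rho(J|_{U^\perp_b})$, and the projector $\I-M$ (orthogonal for the uniform but not for the $b_i$-weighted inner product) drops out of the spectral analysis.

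The second step combines a Dirichlet-type contraction with Perron--Frobenius. For $f$ of zero $b_j$-mean, the spectral bound on $K^{(jai)}$ gives $\|B^{(iaj)}f\|_{L^2(b_i)}\leq\sqrt{|\mu_2^{(iaj)}|}\,\|f\|_{L^2(b_j)}\leq\mu_2\|f\|_{L^2(b_j)}$. For $v\in U^\perp_b$, the triangle inequality then yields
\[
 \bigl\|(Jv)_{ai,\cdot}\bigr\|_{L^2(b_i)}\leq \mu_2\sum_{a'j}A_{ai}^{a'j}\,\bigl\|v_{a'j,\cdot}\bigr\|_{L^2(b_j)}.
\]
Setting $u_{ai}:=\|v_{ai,\cdot}\|_{L^2(b_i)}$ this reads $u'\leq\mu_2\,Au$ pointwise, so iterating $n$ times and invoking $\lim_n\|A^n\|^{1/n}=\lambda_1$ gives $\rho(J|_{U^\perp_b})\leq\lambda_1\mu_2$, which proves (i).

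For part (ii), homogeneity yields the exact tensor product $J=A\otimes B$ on $\R^{|\E|}\otimes\R^q$ and $J|_{U^\perp_b}=A\otimes B|_{V^\perp_b}$, where $V^\perp_b\subset\R^q$ is the $b$-mean-zero hyperplane. The spectrum factorizes, and here $\mu_2=\sqrt{|\mu_2(B^2)|}=|\mu_2(B)|$, so pairing the Perron eigenvector of $A$ with the subdominant ($b$-orthogonal) eigenvector of $B$ exhibits an eigenvalue of $\tilde J$ of modulus $\lambda_1\mu_2$, forcing $\lambda_1\mu_2\leq 1$ at a stable fixed point. The main obstacle I expect is the quotient reduction of step one: carefully verifying that $\tilde J$ factors through $\N/U$ with induced map $J|_{U^\perp_b}$, despite $\I-M$ failing to be orthogonal for the $b_i$-weighted inner product that is needed to exploit reversibility. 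Once this reduction is secured, the analysis cleanly decouples graph structure (via $A$) from local correlations (via the $B^{(iaj)}$).
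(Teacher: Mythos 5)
Your proof is correct, and it reaches the bound by a route that differs from the paper's in two technical respects while sharing its central estimate. The shared core is your ``Dirichlet-type contraction''---zero-mean vectors stay zero-mean under the kernels and their weighted $L^2$ norms contract by $\sqrt{|\mu_2^{(iaj)}|}$, proved via reversibility of $K$ and the Rayleigh quotient---which is exactly Lemma~\ref{lem:ineq} of the paper, stated there in the adjoint (measure/row-vector) picture rather than in your function picture; the swap of $\mu_2^{(iaj)}$ for $\mu_2^{(jai)}$ in your inequality is immaterial because $K^{(iaj)}=B^{(iaj)}B^{(jai)}$ and $K^{(jai)}=B^{(jai)}B^{(iaj)}$ have the same nonzero spectrum. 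The first difference is the treatment of normalization: the paper uses $\widetilde{J}M=0$ to write $\widetilde{J}^n=(\I-M)J^n$ and then restricts to row vectors of zero local sum, whereas you block-triangularize $\widetilde{J}$ with respect to $\N=U\oplus U^\perp_b$, obtaining the exact identity $\mathrm{spec}(\widetilde{J})=\{0\}\cup\mathrm{spec}(J|_{U^\perp_b})$; this is cleaner, it makes explicit the quotient-space viewpoint the paper only sketches in Section~\ref{ssec:b2qs}, and the same reduction serves both (i) and (ii). The second difference is the iteration: the paper decomposes $J^n$ into $A^n$ times path-averaged kernels (\ref{eq:paths}) (Lemma~\ref{lem:ineq2}) and extracts $\lambda_1^n$ through the hybrid norm $\|\cdot\|_{\boldsymbol\pi,b}$ weighted by the Perron eigenvector, while you iterate the one-step entrywise inequality $u'\le\mu_2\,Au$ on the vector of local norms (valid since $A\ge 0$ preserves the entrywise order) and conclude with Gelfand's formula $\lim_n\|A^n\|^{1/n}=\rho(A)=\lambda_1$. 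Your version is shorter and needs only the spectral radius of $A$, not its Perron vector; the paper's path-average formulation is heavier, but it is the form that supports the sharper, path-dependent definition of $\mu_2$ given at the end of Section~\ref{ssec:stabnorm}. Your part (ii) is essentially the paper's one-line tensor-product argument, carried out with more care; note only that, like the paper, it implicitly uses that homogeneity forces a common marginal $b_i\equiv b$, so that $U^\perp_b\cong\R^{|\E|}\otimes V^\perp_b$ and $B$ is reversible with respect to $b$ (hence has real spectrum and $\mu_2=|\mu_2(B)|$).
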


Condition (i) combines the effects of a term ($\mu_2$) which depends on the local
dependence structure of the given fixed point with another one ($\lambda_1$)
characteristic of the underlying graph. For example, in the homogeneous case, if $\G$
has uniform degrees $d_a$ and $d_i$, the condition reads
\[
\mu_2 (d_a-1)(d_i-1) < 1.
\]
In the case of binary variables $\mu_2^{(iaj)} = \det(K^{(iaj)})$, which is just the
square of Pearson's correlation coefficient between $x_i$ and $x_j$, which in
general depends on the factor $a$. The condition (i) of Theorem \ref{thm:stability}
thus is an upper bound on the correlations between variables at stable fixed points.

In order to prove part (i) of the theorem, we will consider a local norm on
$\mathbb{R}^q$ attached to each variable node $i$,
\[
\|x\|_{b_i} \egaldef \Bigl(\sum_{k=1}^q
x_k^2 b_i(k)\Bigr)^{\frac{1}{2}}\ 
\text{and}\ 
\langle x\rangle_{b_i} \egaldef \sum_{k=1}^q x_k b_i(k),
\]
the local average of $x\in\mathbb{R}^q$ w.r.t $b_i$. For
convenience, we will also consider the somewhat hybrid global norm on
$\mathbb{R}^{q\times|\E|}$
\[
\|x\|_{\boldsymbol\pi,b} \egaldef \sum_{(ai) \in \E}\pi_{ai}\|x_{ai}\|_{b_i},
\]
where $\boldsymbol\pi$ is the right Perron vector of $A$, associated to
$\lambda_1$. We have the following useful inequality:
\begin{lem}\label{lem:ineq}
For any $(x^{(i)},x^{(j)})\in \mathbb{R}^q\times\mathbb{R}^q$, such that $\langle
x^{(i)}\rangle_{b_i} = 0$ and $x^{(j)}_\ell\,b_j(\ell) = \sum_k x^{(i)}_k\,
b_i(k) B_{k\ell}^{(iaj)}$,
\[
\langle x^{(j)}\rangle_{b_j} = 0
\qquad \text{and}\qquad 
\|x^{(j)}\|_{b_j}^2 \le \mu_2^{(iaj)} \|x^{(i)}\|_{b_i}^2.
\]
\end{lem}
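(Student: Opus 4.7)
The approach is to recognize the linear map $x^{(i)} \mapsto x^{(j)}$ as the Hilbert space adjoint of the natural pullback operator associated with the Markov kernel $B^{(iaj)}$, and then to deduce the bound from the self-adjoint, positive semidefinite structure of $K^{(iaj)}$ on the $b_i$-weighted inner product space, via the spectral theorem.

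First, the mean-zero claim is immediate: summing the defining identity $x^{(j)}_\ell b_j(\ell) = \sum_k x^{(i)}_k b_i(k) B^{(iaj)}_{k\ell}$ over $\ell$ and using that $B^{(iaj)}$ is row-stochastic yields $\langle x^{(j)}\rangle_{b_j} = \langle x^{(i)}\rangle_{b_i} = 0$. For the quantitative bound, I would introduce the operator $U$ acting on $\mathbb{R}^q$ by $(Uy)_k \egaldef \sum_\ell B^{(iaj)}_{k\ell} y_\ell$, viewed as a map from $\mathbb{R}^q$ endowed with the $b_j$-weighted inner product into $\mathbb{R}^q$ endowed with the $b_i$-weighted inner product. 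The symmetry identity $b_i(k) B^{(iaj)}_{k\ell} = b_j(\ell) B^{(jai)}_{\ell k}$---which holds because both sides equal $\sum_{\x_{a\setminus\{i,j\}}} b_a(\x_a)$ evaluated at $x_i=k$, $x_j=\ell$---then shows, by a short computation of the adjoint, that $U^\ast$ is precisely the map $x^{(i)}\mapsto x^{(j)}$. Consequently $\|x^{(j)}\|_{b_j}^2 = \|U^\ast x^{(i)}\|_{b_j}^2 = \langle UU^\ast x^{(i)},\, x^{(i)}\rangle_{b_i}$.

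A further direct computation, reusing the same symmetry identity, gives $UU^\ast = B^{(iaj)}B^{(jai)} = K^{(iaj)}$, so $K^{(iaj)}$ is automatically self-adjoint and positive semidefinite on the $b_i$-weighted space. Its Perron eigenvalue is $1$, achieved by the constant vector, and the remaining eigenvalues are non-negative and bounded above by $\mu_2^{(iaj)}$. Since $\langle x^{(i)}\rangle_{b_i}=0$, the vector $x^{(i)}$ is orthogonal in this inner product to the constant eigenvector, and decomposing it in the orthonormal eigenbasis of $K^{(iaj)}$ together with the spectral bound gives $\langle K^{(iaj)} x^{(i)},\, x^{(i)}\rangle_{b_i}\le \mu_2^{(iaj)}\|x^{(i)}\|_{b_i}^2$, which is exactly the claimed inequality.

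The only genuinely delicate point is the bookkeeping of the two different weighted inner products---on $\mathbb{R}^q$ viewed respectively as $L^2(b_i)$ and $L^2(b_j)$---so that the adjoint identification $x^{(j)}=U^\ast x^{(i)}$ comes out with the correct $b_i$ and $b_j$ factors. Once that is dispatched, the remainder of the argument is a routine application of the spectral theorem for self-adjoint operators, and no quantitative estimation is needed.
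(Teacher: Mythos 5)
Your proof is correct and follows essentially the same route as the paper's: both reduce the claim to the identity $\|x^{(j)}\|_{b_j}^2 = \langle K^{(iaj)}x^{(i)},\,x^{(i)}\rangle_{b_i}$ (you via the adjoint identification $x^{(j)}=U^\ast x^{(i)}$, the paper via a direct detailed-balance computation) and then invoke the variational characterization of the second eigenvalue of the self-adjoint kernel $K^{(iaj)}$ on the orthogonal complement of the constants. The differences are cosmetic: your $UU^\ast$ factorization makes the positive semidefiniteness of $K^{(iaj)}$ explicit (so the absolute value in the paper's definition of $\mu_2$ is superfluous), and you spell out the mean-zero claim that the paper leaves implicit.
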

\begin{proof}
By definition of the kernels $K^{(iaj)}$, we have
\begin{equation*}
\|x^{(j)}\|_{b_j}^2 = \sum_{k=1}^q \frac{1}{b_j(k)} \Bigl|\sum_{\ell=1}^q b_{\ell
k}^{(iaj)}b_i(\ell)\,x_\ell^{(i)}\Bigr|^2 = \sum_{\ell,m} x_\ell^{(i)}x_m^{(i)}
K_{\ell m}^{(iaj)}b_i(\ell).
\end{equation*}
% \begin{align*}
% \|x^{(j)}\|_{b_j}^2 &= \sum_{k=1}^q \frac{1}{b_j(k)} 
% \Bigl|\sum_{\ell=1}^q b_{\ell k}^{(iaj)}b_i(\ell)\,x_\ell^{(i)}\Bigr|^2\\[0.2cm]
% &= \sum_{\ell,m} x_\ell^{(i)}x_m^{(i)} K_{\ell m}^{(iaj)}b_i(\ell).
% \end{align*}
Since $K^{(iaj)}$ is reversible, Rayleigh's theorem implies
\[
\mu_2^{(iaj)} \egaldef \sup_{x}
\Bigl\{\frac{\sum_{k\ell}x_k x_\ell K_{k\ell}^{(iaj)}b_i(k)}{\sum_k
x_k^2 b_i(k)},
\langle x\rangle_{b_i} = 0,x\ne 0\Bigr\},
\]
which concludes the proof.
\end{proof}
To deal with iterations of $J$, we express it as a sum over paths.
\[
\bigl(J^n\bigr)_{ai,k}^{a'j,\ell} = \bigr(A^n\bigr)_{ai}^{a'j} 
\bigl(B_{ai,a'j}\n\bigr)_{k\ell}\ ,
\]
where $B_{ai,a'j}\n$ is an average stochastic kernel,
\begin{equation}\label{eq:paths}
B_{ai,a'j}\n \egaldef \frac{1}{|\Gamma_{ai,a'j}\n|}
\sum_{\gamma\in\Gamma_{ai,a'j}\n}\prod_{(ck,d\ell)\in\gamma}B^{(kc\ell)}. 
\end{equation}
$\Gamma_{ai,a'j}\n$ represents the set of directed path of length
$n$ joining $ai$ and $a'j$ on $L(\G)$ and its cardinal is precisely
$|\Gamma_{ai,a'j}\n| = \bigr(A^n\bigr)_{ai}^{a'j}$.
\begin{lem}\label{lem:ineq2}
For any $(x^{(ai)},x^{(a'j)})\in \mathbb{R}^{2q}$, such that $\langle
x^{(ai)}\rangle_{b_i} = 0$ and
\[
 x^{(a'j)}_\ell\,b_j(\ell) = \sum_k
x^{(ai)}_k\,b_i(k)\bigl(B_{ai,a'j}\n\bigr)_{k\ell}\ ,
\] 
the following inequality holds
\[
\|x^{(a'j)}\|_{b_j} \le \mu_2^n \|x^{(ai)}\|_{b_i}.
\]
\end{lem}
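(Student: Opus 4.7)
The plan is to reduce Lemma~\ref{lem:ineq2} to an iterated application of Lemma~\ref{lem:ineq} along each directed path in $\Gamma\n_{ai,a'j}$, and then recombine the per-path bounds by convexity. Fix such a path $\gamma=(a_0 i_0, a_1 i_1, \ldots, a_n i_n)$, with $(a_0,i_0)=(a,i)$ and $(a_n,i_n)=(a',j)$, and define a sequence of vectors $x^{(0)},\ldots,x^{(n)}\in\mathbb{R}^q$ recursively by $x^{(0)}\egaldef x^{(ai)}$ and
\[
 x^{(s+1)}_\ell\,b_{i_{s+1}}(\ell) \egaldef \sum_{k} x^{(s)}_k\,b_{i_s}(k)\,B^{(i_s a_s i_{s+1})}_{k\ell}.
\]
A straightforward induction on $s$ shows that the terminal vector $x^{(n)}$ satisfies
\[
 x^{(n)}_\ell\,b_j(\ell) = \sum_{k} x^{(ai)}_k\,b_i(k)\Bigl(\prod_{(ck,d\ell)\in\gamma} B^{(kc\ell)}\Bigr)_{k\ell},
\]
so it is exactly the vector encoded by $\gamma$ in the sum \eqref{eq:paths} that defines $B\n_{ai,a'j}$.

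Next, since $\langle x^{(0)}\rangle_{b_{i_0}}=0$ by hypothesis, Lemma~\ref{lem:ineq} applies at the first step and delivers both $\langle x^{(1)}\rangle_{b_{i_1}}=0$ and $\|x^{(1)}\|_{b_{i_1}}^2\le\mu_2^{(i_0 a_0 i_1)}\|x^{(0)}\|_{b_{i_0}}^2\le\mu_2^2\|x^{(0)}\|_{b_{i_0}}^2$, where the last inequality uses the definition $\mu_2=\max_{(iaj)}\sqrt{|\mu_2^{(iaj)}|}$. The zero-mean property is thus transmitted to each intermediate node, and the inductive step $\|x^{(s+1)}\|_{b_{i_{s+1}}}\le\mu_2\|x^{(s)}\|_{b_{i_s}}$ compounds along the $n$ edges of $\gamma$ to yield the uniform per-path bound
\[
 \|x^{(n,\gamma)}\|_{b_j} \le \mu_2^n\,\|x^{(ai)}\|_{b_i}.
\]

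Finally, since the map $x^{(ai)}\mapsto x^{(a'j)}$ is linear in the averaged kernel $B\n_{ai,a'j}$, expanding \eqref{eq:paths} writes
\[
 x^{(a'j)} = \frac{1}{|\Gamma\n_{ai,a'j}|}\sum_{\gamma\in\Gamma\n_{ai,a'j}} x^{(n,\gamma)},
\]
and the triangle inequality for $\|\cdot\|_{b_j}$ combined with the per-path estimate above immediately gives $\|x^{(a'j)}\|_{b_j}\le\mu_2^n\,\|x^{(ai)}\|_{b_i}$. I expect no genuine obstacle in this scheme: the only points requiring care are verifying that the zero-mean hypothesis is indeed propagated at every intermediate node (so Lemma~\ref{lem:ineq} remains applicable at each step), and ensuring that the per-path construction truly matches the path product inside \eqref{eq:paths} so the convex recombination is exact.
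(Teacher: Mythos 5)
Your proposal is correct and follows essentially the same route as the paper: a recursive application of Lemma~\ref{lem:ineq} along each individual path $\gamma\in\Gamma_{ai,a'j}\n$ (propagating the zero-mean property at every step), followed by the triangle inequality applied to the convex combination over paths in \eqref{eq:paths}. You merely spell out the induction and the identification of the per-path contribution more explicitly than the paper does.
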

\begin{proof}
Let $x^{(a'j)}(\gamma)$ be the contribution to $x^{(a'j)}$ corresponding to the
path $\gamma\in\Gamma_{ai,a'j}\n$. Using Lemma~\ref{lem:ineq}
recursively yields for each individual path
\[
\|x^{(a'j)}(\gamma)\|_{b_j} \le \mu_2^n \|x^{(ai)}\|_{b_i},
\]
and, owing to triangle inequality,
\[
\|x^{(a'j)}\|_{b_j} \le
\frac{1}{|\Gamma_{ai,a'j}\n|}\sum_{\gamma\in\Gamma_{ai,a'j}\n}
\|x^{(a'j)}(\gamma)\|_{b_j} \le \mu_2^n \|x^{(ai)}\|_{b_i}.
\]
\end{proof}
%It is now possible to prove Theorem~\ref{thm:stability}.
\begin{proof}[Proof of Theorem~\ref{thm:stability}]
Let $\v$ and $\v'$ two vectors with $\v' = \v \tilde J^n =
\v(\I-M)J^n$, since $\tilde J M = 0$. Recall that the
effect of $(\I-M)$ is to first project on a vector with zero local
sum, $\sum_k\bigl(\v(\I-M)\bigr)_{ai,k} = 0,\ \forall i\in\V$, so we
assume directly $\v$ of the form
\[
v_{ai,k} = x_{ai,k}\,b_i(k),\qquad\text{with}\qquad 
\langle x_{ai}\rangle_{b_i} = 0.
\]
As a result, $\v' = \v J^n$ is of the same form. Let
$x'_{a'j,\ell}\egaldef v'_{a'j,\ell}/b_j(\ell)$. We have
\[
\|x'\|_{\pi,b} 
\le \sum_{(a'j)\in\E}\pi_{a'j}\sum_{(ai)\in\E}\bigl(A^n\bigr)_{ai}^{a'j}
\|y^{(ai)}_{a'j}\|_{b_j},
\]
with $y^{(ai)}_{a'j,\ell}\ b_j(\ell) = \sum_k
x_{ai,k}\,b_i(k)\bigl(B_{ai,a'j}\n\bigr)_{k\ell}$.
Applying Lemma~\ref{lem:ineq2} to $y^{(ai)}_{a'j}$ yields
\begin{align*}
\|x'\|_{\pi,b} 
&\le
\sum_{(a'j)\in\E}\pi_{a'j}\sum_{(ai)\in\E}\bigl(A^n\bigr)_{ai}^{a'j}\mu_2^n\|x_{ai}
\|_ { b_i }
= \lambda_1^n\mu_2^n \|x\|_{\pi,b},
\end{align*}
since $\boldsymbol\pi$ is the right Perron vector of $A$. This ends the proof of (i).
% \end{proof}
% \begin{proof}[Proof of Theorem~\ref{thm:stability}(ii)]

For (ii), when the system is homogeneous, $\widetilde J$ is a tensor product of $A$
with
$\widetilde B$, and its spectrum is therefore the product of their
respective spectra. 
\end{proof} 

The quantity $\mu_2$ is representative of the level of mutual
information between variables. It relates to the spectral gap (see
e.g.~\cite{DiaStr} for geometric bounds) of each elementary stochastic
matrix $B^{(iaj)}$, while $\lambda_1$ encodes the statistical
properties of the graph connectivity. The bound $\lambda_1\mu_2 <1$
could be refined when dealing with the statistical average of the sum
over path in (\ref{eq:paths}) which allows to define $\mu_2$ as
\[
\mu_2 = \lim_{n\to\infty}\max_{(ai,a'j)}
\Bigl\{\frac{1}{|\Gamma_{ai,a'j}\n|}
\sum_{\gamma\in\Gamma_{ai,a'j}\n}
\Bigl(\prod_{(x,y)\in\gamma}\mu_2^{(xy)}\Bigr)^{\frac{1}{2n}}\Bigr\}.
\]

\subsection{Local convergence in quotient space $\N \setminus \W$}
\label{ssec:b2qs}
We make here the connexion with the notion of local stability in the
quotient space $\N \setminus \W$ of Section~\ref{sec:bdynamic}.
Trivial computations yield $\nabla \Lambda = J$.
% \[
%  \frac{\partial \Lambda_{ai,x_i}(\mu)}{\partial \mu_{bj}(x_j)}
% =
% \frac{b_{ij|a}(x_i,x_j)}{b_i(x_i)} A_{ai}^{bj} = J_{ai,x_i}^{bj,x_j}.
% \]
In terms of convergence in $\N\setminus\W$, the stability of a fixed point
is given by the projection of $J$ on the quotient space $\N \setminus \W$ 
and we have \cite{MooijKappen07}:
\[
[J] \egaldef [\nabla \Lambda] = \nabla [\Lambda]
\]
The normalization $Z^\mathrm{mess}_{ai}$ is in fact just a way to compute $[J]$ 
by applying a projection $\I-M$ to $J$. Since $\text{ker}(\I-M) = \W$,
it is just a quotient map from $\N$ to $\N\setminus \W$. For any
differentiable positively homogeneous normalization, we obtain the same result, the
Jacobian of the corresponding normalized scheme is the projection of $J$ on
$\N\setminus\W$, through some quotient map.

%%%%%%%%%%%%%%%%%%%%%%%%%%%%%%%%%%%%%%%%%%%%%%%%%%%%%%%%%%%%%%%%%%%%%%%%%%%%%%%%%%%%
\section{Conclusion}
\label{sec:conclusion}
We provided here, for the first time at our knowledge, an explicit
sufficient condition for local stability of a belief propagation fixed
point, instead of sufficient conditions for convergence to a unique
fixed point. This condition is coherent with the usual understanding
of BP convergence; when the connectivity of both $\G$ and $L(\G)$
increases, $\lambda_1$ is also increasing since $A$ is increasing. So
Theorem $\ref{thm:stability}$ imposes that the level of mutual
information $\mu_2^{(iaj)}$ between variables $i$ and $j$ at a stable
fixed point decreases. Reciprocally, the sparser $\G$ is, the bigger
mutual information can be. This somewhat explains why BP performs
better on sparse graphs: the amount of admissible mutual information
between variables at a stable fixed point is larger on a sparse graph
than on a dense one.

\bibliography{refer}
\bibliographystyle{plain}

\end{document}